%% file: neurips_2026.tex
\documentclass{article}

\usepackage[main, final]{neurips_2026}
\usepackage[utf8]{inputenc}
\usepackage[T1]{fontenc}
\usepackage{hyperref}
\usepackage{url}
\usepackage{booktabs}
\usepackage{amsfonts}
\usepackage{nicefrac}
\usepackage{xcolor}
\usepackage{amsmath}
\usepackage{amsthm}
\usepackage{graphicx}
\usepackage{multirow}
\usepackage{subcaption}

\newtheorem{theorem}{Theorem}
\newtheorem{proposition}[theorem]{Proposition}

\title{Invertible Logits Transformation for Accuracy-Preserving Post-Hoc Uncertainty Calibration}

\author{Lening Zhao\thanks{Equal contribution}, ~Qipeng Zhan\footnotemark[1] \ \thanks{corresponding author}\ , ~Li Shen\footnotemark[2]\\
University of Pennsylvania\\
\texttt{\{lnzhao@engineering, qipengz@sas, li.shen@pennmedicine\}.upenn.edu}
}

\begin{document}

\maketitle

\begin{abstract}
Post-hoc calibration aligns a classifier's predicted confidences with its empirical accuracy without retraining. An ideal calibrator should correct nonlinear miscalibration, scale gracefully to large label spaces, and preserve the original predictions; existing methods typically violate at least one of these properties---temperature scaling lacks expressivity, more flexible parametric alternatives introduce parameters that grow with the number of classes $C$, and other expressive methods do not preserve the rank ordering of class scores and may alter the predicted class. We propose \textbf{Invertible Logits Transformation (InvLT)}, which applies a learned scalar MLP $f:\mathbb{R}\to\mathbb{R}$ element-wise to the pre-softmax logits. Sharing $f$ across all logit dimensions makes the parameter count independent of $C$. Monotonicity of $f$---and hence preservation of the argmax prediction---is softly encouraged via a paired inverse network rather than enforced through the numerical integration required by prior monotone calibrators; this avoids their computational overhead while empirically preserving the original classification accuracy in every setting we evaluate. Across standard image classification benchmarks and a range of architectures, InvLT consistently outperforms a broad set of post-hoc baselines on standard calibration metrics.
\end{abstract}

\section{Introduction}

Calibration is a fundamental requirement for trustworthy prediction: a classifier is calibrated if its predicted confidence matches its empirical accuracy~\citep{naeini2015obtaining,guo2017calibration}. Calibration matters wherever downstream decisions depend on probabilistic predictions—medical diagnosis, autonomous driving, decision support, and selective prediction—yet modern deep networks are typically over-confident, even assigning high probability to incorrect predictions~\citep{guo2017calibration,minderer2021revisiting}. \emph{Post-hoc calibration} addresses this by learning a transformation of a fixed model's outputs on a held-out calibration set, leaving the trained weights untouched.

The most widely adopted post-hoc method is temperature scaling (TS)~\citep{guo2017calibration}, which divides all logits by a single learned scalar $T > 0$. Because scalar division preserves the ordering of logit values, TS guarantees that the predicted class is unchanged---a property we call \emph{accuracy preservation}. However, TS has only one degree of freedom: it applies the same linear transformation $f(x) = x/T$ to every logit of every sample, and therefore cannot correct miscalibration that varies across confidence levels.

\paragraph{Two complementary axes of expressivity.} TS admits two natural accuracy-preserving extensions, distinguished by what the transformation depends on:
\begin{itemize}
    \item  \textbf{Sample-specific.} Make the temperature depend on the input. Parameterized Temperature Scaling (PTS)~\citep{tomani2022parameterized} predicts a per-sample $T(\mathbf z)$ via a network and applies \[\phi_{\mathrm{PTS}}(\mathbf z)_c = z_c/T(\mathbf z).\] All logits of a given sample are still divided by the same positive scalar, so the within-sample ordering—and the argmax—is preserved.
    \item \textbf{Logit-specific.} Make the transformation depend on the value of each logit. Replace the per-sample temperature with a shared scalar function $f$ applied logit-wise: 
    \[
    \phi_{\mathrm{InvLT}}(\mathbf z)_c = f(z_c).
    \]
    If $f$ is strictly increasing, the ordering of logit values is preserved for every input, and so is the argmax.
\end{itemize}
These directions are complementary: PTS reshapes calibration across samples while keeping the per-sample logit map linear; the logit-specific approach reshapes the logit scale nonlinearly while sharing the same map across samples and classes.

The logit-specific direction has been explored through intra-order-preserving calibration functions~\citep{rahimi2020intra}, whose diagonal sub-family learns a strictly increasing scalar function using the UMNN architecture~\citep{wehenkel2019umnn}. UMNN parameterizes $f'$ as a positive network and recovers $f$ by numerical quadrature at every forward pass—a procedure repeated during both training and inference, and one that constrains the architecture and tooling around $f$ (e.g., choice of activations, integration solvers, gradient computation). Our central methodological contribution is to replace this hard architectural constraint with a soft regularizer rooted in a classical result: \emph{a continuous injective function on an interval is strictly monotone.} We therefore train $f$ jointly with an auxiliary inverse network $g$, minimizing the reconstruction error $\lVert g(f(u_k)) - u_k\rVert^2$ over reference points $\{u_k\}_{k=1}^K$ drawn from the operating range of the calibration logits. This approach---well-established in image translation but, to our knowledge, novel in the post-hoc calibration literature---drives $f$ toward injectivity, and hence toward monotonicity, without imposing any architectural restriction. This reframing has three practical advantages. First, training avoids numerical integration: each step adds only $K$ scalar evaluations of $g\circ f$ on a precomputed grid. Second, inference reduces to a small MLP applied element-wise, comparable in cost to TS. Third, $f$ can be any standard MLP with any activation function, simplifying implementation and tuning. In general, our contributions are as follows:
\begin{enumerate}
    \item We introduce InvLT, a post-hoc calibrator that applies a shared scalar MLP element-wise to logits. The number of learnable parameters is independent of the number of classes $C$.
    \item We propose a reconstruction regularizer via an inverse network that softly enforces monotonicity, avoiding the numerical integration required by prior monotone calibration methods~\citep{wehenkel2019umnn,rahimi2020intra}.
    \item Across CIFAR-10, CIFAR-100, and ImageNet with seven architectures, InvLT consistently outperforms all baselines while being substantially faster to train than UMNN.
\end{enumerate}

\section{Related Work}

Post-hoc calibration methods can be broadly organized along two axes: the space in which the transformation operates (logit space vs.\ probability space) and the expressiveness of the parameterization (single-parameter vs.\ multi-parameter or nonparametric).

\paragraph{Temperature and affine scaling.}
Temperature scaling~\citep{guo2017calibration} is the simplest and most widely used post-hoc method. It learns a single scalar $T > 0$ and replaces the logits $\mathbf{z}$ with $\mathbf{z}/T$. Despite having only one parameter, it is effective when the miscalibration is approximately uniform across confidence levels. Vector scaling extends this by learning a per-class scaling factor and bias, while matrix scaling further introduces cross-class interactions via a full weight matrix~\citep{guo2017calibration}. However, matrix scaling requires $O(C^2)$ parameters and is prone to overfitting when the validation set is small relative to the number of classes.

\paragraph{Ensemble and parameterized temperature scaling.}
Ensemble Temperature Scaling (ETS)~\citep{zhang2020mix} learns a convex combination of temperature scaling, no calibration, and a uniform baseline, providing modest improvements with minimal additional complexity. Parameterized Temperature Scaling (PTS)~\citep{tomani2022parameterized} conditions the temperature on the input logits via a small neural network, allowing instance-dependent recalibration while preserving the argmax prediction. PTS is conceptually related to InvLT in that both use neural networks for calibration, but PTS learns a per-instance temperature whereas InvLT learns a shared nonlinear transformation of the logit values themselves.

\paragraph{Uni-variate calibration methods.}
Histogram binning~\citep{zadrozny2001obtaining} partitions the confidence range into bins and assigns each bin the empirical accuracy of the samples falling within it. Isotonic regression~\citep{zadrozny2002transforming} fits a monotone piecewise-constant function to map predicted confidences to calibrated probabilities. Platt scaling~\citep{platt1999probabilistic} fits a logistic regression on the classifier's outputs. These methods operate in probability space and are originally designed for binary calibration; extending them to the multiclass setting is typically done with a one-vs-rest scheme, after which a renormalization step is required to combine the per-class calibrated probabilities, which can re-shift the already-calibrated outputs.

\paragraph{Spline and Dirichlet calibration.}
Spline calibration~\citep{gupta2020spline} models the calibration map using natural cubic splines in probability space, offering smooth and flexible recalibration. Dirichlet calibration~\citep{kull2019dirichlet} transforms the log-probabilities via a linear map followed by a softmax, parameterized as a Dirichlet distribution over the probability simplex. This approach captures cross-class interactions but introduces $O(C^2)$ parameters and can overfit or diverge on datasets with many classes and limited validation data, as evidenced by its poor performance on ImageNet in our experiments.

\paragraph{Order-preserving and monotone neural calibration.}
Rahimi et al.~\citep{rahimi2020intra} introduced intra order-preserving functions for post-hoc calibration, a family of transformations that preserve the top-$k$ predictions of a classifier. Their diagonal sub-family is particularly related to InvLT: it learns an increasing scalar function shared across logit dimensions, using the UMNN architecture of Wehenkel and Louppe~\citep{wehenkel2019umnn} to enforce monotonicity. UMNN represents a monotone function by parameterizing its derivative as a positive neural network and computing the function value through numerical integration. While this gives strict monotonicity, the repeated numerical integration incurs substantial training and inference overhead. InvLT targets the same diagonal, order-preserving setting, but replaces hard architectural monotonicity with a soft inverse-reconstruction regularizer, allowing the calibrating function to be a standard scalar MLP.

\section{Method}
\label{sec:method}

We present \emph{Invertible Logits Transformation} (InvLT), a post-hoc calibrator that learns a nonlinear transformation of the logit scale while preserving the classifier's predictions. The central idea is to recalibrate confidence by reshaping logits through a strictly increasing scalar function, obtained by training $f$ to be invertible rather than constraining it to be monotone by construction.

\subsection{Post-hoc Calibration and Accuracy Preservation}

Let $\mathbf z \in \mathbb R^C$ denote the pre-softmax logits produced by a fixed classifier for an input $\mathbf x$, where $C$ is the number of classes. The corresponding predictive distribution is
$\hat{\mathbf p} = \mathrm{softmax}(\mathbf z)$. Post-hoc calibration learns a transformation
$\phi:\mathbb R^C \to \mathbb R^C$ on a held-out calibration set $\mathcal D_{\mathrm{cal}}=\{(\mathbf z_i,y_i)\}_{i=1}^N$ and returns calibrated probabilities $\hat{\mathbf p}^{\,\mathrm{cal}}=\mathrm{softmax}(\phi(\mathbf z))$. In many settings, the base classifier has already been optimized for accuracy, and calibration is intended only to correct its confidence estimates. We therefore require the calibration map to preserve the predicted class. We call a transformation $\phi$ accuracy-preserving if
\[
    \arg\max_c \phi(\mathbf z)_c
    =
    \arg\max_c z_c
    \qquad
    \text{for all } \mathbf z .
\]
This ensures that calibration changes only the probabilities assigned to classes, not the classifier's decision.

\textbf{Logit-specific transformation} restricts the calibration map to an element-wise transformation:
\begin{equation}
    \phi_f(\mathbf z)
    =
    \bigl(f(z_1), f(z_2), \ldots, f(z_C)\bigr),
    \label{eq:m_l_t_transform}
\end{equation}
where $f:\mathbb R \to \mathbb R$ is invertible and shared across all classes and all samples. The calibrated probabilities are then
\[
    \hat{\mathbf p}^{\,\mathrm{cal}}
    =
    \mathrm{softmax}(\phi_f(\mathbf z)).
\]

The key property of this family is that strict increase of $f$ is sufficient to preserve accuracy.
The ordering of all logits is unchanged after applying $f$ element-wise. Therefore the index of the largest logit is unchanged, and the predicted class is preserved.

\subsection{Monotonicity via Invertibility}

A standard neural network is flexible but does not by itself guarantee monotonicity. Prior work addresses this by enforcing strict monotonicity through architectural constraints: UMNN~\citep{wehenkel2019umnn} and the Intra Order-Preserving method~\citep{rahimi2020intra} parameterize the derivative $f'$ as a positive network and recover $f$ via numerical integration. This guarantees monotonicity by construction but incurs substantial computational overhead because numerical quadrature must be performed at every forward pass during both training and inference.

Rather than constraining the architecture, we ensure monotonicity through a reconstruction regularizer. The key theoretical insight is that a continuous scalar function that is invertible must be strictly monotone:

\begin{proposition}[Continuous injective functions are strictly monotone]
\label{thm:mono}
Let $I \subseteq \mathbb{R}$ be an interval and let $f : I \to \mathbb{R}$ be continuous. If $f$ is injective, then $f$ is strictly monotone on $I$.
\end{proposition}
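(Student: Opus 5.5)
The plan is to argue by contradiction, reducing everything to the intermediate value theorem applied on a compact subinterval. The first step is a three-point lemma: if $a<b<c$ in $I$, then $f(b)$ lies strictly between $f(a)$ and $f(c)$. Injectivity gives that $f(a),f(b),f(c)$ are pairwise distinct. Suppose $f(b)$ is not between the other two; say $f(b)>\max\{f(a),f(c)\}$ (the case $f(b)<\min\{f(a),f(c)\}$ is symmetric, replacing $f$ by $-f$). Pick any value $v$ with $\max\{f(a),f(c)\}<v<f(b)$. Continuity of $f$ on $[a,b]$ and on $[b,c]$ together with the intermediate value theorem produces $s\in(a,b)$ and $t\in(b,c)$ with $f(s)=f(t)=v$. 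Since $s<b<t$, we have $s\neq t$, which contradicts injectivity.

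The second step upgrades this local statement to global monotonicity. If $I$ has fewer than two points there is nothing to prove. Otherwise fix $x_0<y_0$ in $I$, and assume without loss of generality that $f(x_0)<f(y_0)$; otherwise apply the argument to $-f$, which is also continuous and injective. The goal is then to show that $x<y$ in $I$ implies $f(x)<f(y)$.

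To do this, pass to the finite set $\{x_0,y_0,x,y\}$, which has at most four points, and list it in increasing order. Applying the three-point lemma to each consecutive triple shows that $f$ restricted to this finite ordered set is strictly monotone, since the direction of strict inequality cannot flip between consecutive pairs. Because $f(x_0)<f(y_0)$ and $x_0<y_0$ both belong to the set, that direction must be increasing, and hence $f(x)<f(y)$.

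I expect this final bookkeeping step to be the main obstacle. The difficulty is not depth but making the case analysis over relative positions of $x,y,x_0,y_0$ clean, including coincident points. Working with the sorted finite set handles all configurations uniformly: coincidences simply reduce the set to two or three points, and the same argument applies. This avoids enumerating cases by hand.
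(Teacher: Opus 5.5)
Your proof is correct and follows the paper's argument: your three-point lemma is exactly the paper's contradiction, which applies the Intermediate Value Theorem on $[a,b]$ and $[b,c]$ at a peak $a<b<c$ to produce two distinct preimages of $v$. Your second step, which sorts $\{x_0,y_0,x,y\}$ and chains consecutive triples, justifies the reduction from ``not strictly monotone'' to the existence of such a peak or valley triple; the paper only asserts this step (``Then there exist $a<b<c$\ldots''), so your version is the more complete of the two.
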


The proof, via the Intermediate Value Theorem, is given in Appendix~\ref{app:proof}.

To establish that $f$ is invertible, it suffice to exhibit a map $g$ satisfying $g \circ f = \mathrm{id}$. This observation motivates a simple regularizer: we introduce an auxiliary inverse network $g_\psi : \mathbb{R} \to \mathbb{R}$ trained jointly with $f_\theta$, and define a set of reference points $\{u_k\}_{k=1}^K$ covering the relevant logit range observed on the calibration set. The reconstruction loss encourages $g_\psi \circ f_\theta$ to approximate the identity:
\begin{equation}
    \mathcal{L}_{\mathrm{rec}} = \frac{1}{K} \sum_{k=1}^{K} \bigl(g_\psi(f_\theta(u_k)) - u_k\bigr)^2.
    \label{eq:rec_loss}
\end{equation}
Minimizing this loss drives $f_\theta$ toward injectivity and hence strict monotonicity, without constraining the network architecture. Although this enforcement is soft rather than exact, we find empirically that accuracy is preserved in every experiment we report (Section~\ref{sec:ablation}). This approach~\citep{zhu2017unpaired} requires only $K$ scalar evaluations per training step, adding negligible cost, while avoiding the numerical integration needed by UMNN. The auxiliary network $g_\psi$ is the same architecture as $f_\theta$ used only during training and is discarded at inference.

\paragraph{Orientation.}
Proposition~\ref{thm:mono} guarantees strict monotonicity but not its direction; in principle $f_\theta$ could converge to a strictly \emph{decreasing} function, which would reverse the argmax. In practice, no explicit orientation penalty is needed: a decreasing $f_\theta$ inverts class rankings and produces high training loss, so the calibration objective itself selects the increasing branch. We confirm this empirically: in every setting we tested, InvLT preserves both the increasing orientation and the original classification accuracy (Section~\ref{sec:ablation}).

\subsection{Training and Inference}
A post-hoc calibrator is usually fitted by optimizing a Negative Log-Likelihood (NLL) Loss
\begin{equation}
    \mathcal{L}_\text{NLL}
    =
    -\frac{1}{N}\sum_{n=1}^N \log
    \mathrm{softmax}(\phi_{f_\theta}(\mathbf z_n))_{y_n},
\end{equation}

or a Brier loss
\begin{equation}
    \mathcal{L}_\text{Brier}
    =
    \frac{1}{N}\sum_{n=1}^N\sum_{c=1}^C
    \left(
    \delta_{nc}
    -
    \mathrm{softmax}(\phi_{f_\theta}(\mathbf z_n))_c
    \right)^2,
\end{equation}
where $\delta_{nc} = \mathbf 1[y_n = c]$.

The complete objective for our InvLT is then:
\begin{equation}
    \mathcal{L} = \mathcal{L}_\text{Brier/NLL} + \lambda \cdot \mathcal{L}_{\mathrm{rec}},
    \label{eq:total}
\end{equation}
where $\lambda > 0$ controls the regularization strength. In practice we activate the regularization term after certain epochs. This warm-up technique allows the calibrator to first reduce calibration error before monotonicity is enforced. Both $f_\theta$ and $g_\psi$ are optimized jointly with the Adam optimizer~\citep{kingma2014adam}. Full hyperparameters are in Appendix~\ref{app:hparams}.

At inference, $g_\psi$ is discarded. The calibrated probabilities are simply $\mathrm{softmax}(f_\theta(z_1), \ldots, f_\theta(z_C))$. When $f_\theta$ is strictly increasing, the predicted class is unchanged. The only inference cost is evaluating a small scalar network on each logit.

\paragraph{Remark.}
Proposition~\ref{thm:mono} establish that \emph{exact} invertibility implies strict monotonicity. The reconstruction loss in Eq.~\eqref{eq:rec_loss}, however, only enforces $g_\psi \circ f_\theta \approx \mathrm{id}$ at $K$ finitely many reference points with a finite weight $\lambda$, and therefore does not provide a formal guarantee that $f_\theta$ is globally monotone or that the argmax is preserved on every input. We view the regularizer as a principled heuristic motivated by Proposition~\ref{thm:mono}, and verify it empirically: the learned transformations visualized in Figure~\ref{fig:learned_f} are strictly increasing across all datasets and calibration set sizes, and the ablation in Section~\ref{sec:ablation} confirms that the regularizer preserves the base classifier's accuracy.

\section{Experiments}

\subsection{Setup}

\paragraph{Datasets and models.}
We evaluate post-hoc calibration on three benchmark settings spanning different scales and class complexities. \textbf{CIFAR-10} and \textbf{CIFAR-100}~\citep{krizhevsky2009learning} use ResNet-50~\citep{he2016resnet} as the primary backbone, evaluated across five random seeds. For architecture generalization, we additionally evaluate CIFAR-100 on VGG-16/19~\citep{simonyan2014vgg}, DenseNet-121~\citep{huang2017densely}, Wide ResNet~\citep{zagoruyko2016wide}, and ViT-B/16~\citep{dosovitskiy2021vit}. \textbf{ImageNet}~\citep{deng2009imagenet} uses pretrained ResNet-50 and ResNet-152 checkpoints. All logits are pre-extracted and fixed; calibration methods operate only on held-out validation logits. We use $N = 500$, $N = 5000$, $N = 25000$ held-out samples for calibrator fitting (denoted val\,500, val\,5\,000, val\,25\,000) as our primary setting.

\paragraph{Baselines.}
We compare against No Calibration, Temperature Scaling (TS)~\citep{guo2017calibration}, Vector/Matrix Scaling~\citep{guo2017calibration}, Ensemble Temperature Scaling (ETS)~\citep{zhang2020mix}, Parameterized Temperature Scaling (PTS)~\citep{tomani2022parameterized}, Histogram Binning~\citep{zadrozny2001obtaining}, Isotonic Regression~\citep{zadrozny2002transforming}, Spline Calibration~\citep{gupta2020spline}, Dirichlet Calibration~\citep{kull2019dirichlet}, and UMNN Calibration~\citep{wehenkel2019umnn,rahimi2020intra}. We separate methods into those that may alter the predicted class and those that preserve the original classification decision by construction.

\paragraph{Metrics.}
We report Expected Calibration Error (ECE, 15 equal-width bins, \%$\downarrow$)~\citep{naeini2015obtaining,guo2017calibration}, Adaptive ECE (AdaECE, \%$\downarrow$)~\citep{nixon2019measuring}, KDE-ECE (\%$\downarrow$)~\citep{zhang2020mix}, and Negative Log-Likelihood (NLL, $\downarrow$). Classification accuracy is unchanged for accuracy-preserving methods.

\paragraph{Hyperparameters.}
Activation function and hidden dimensions are selected per (dataset, calibration - set size) by grid search on the calibration set, using 5-seed-mean ECE as the selection criterion. To ensure a fair comparison, all tunable baselines (including PTS, UMNN, Histogram Binning, Spline, Matrix Scaling, and Dirichlet Calibration) are grid-searched under the same protocol. The selected configurations, the search grid, and the corresponding settings for all tunable methods are given in Appendix~\ref{app:hparams}.

\subsection{Main Results}

Table~\ref{tab:main_v5k} reports all four calibration metrics at val\,5\,000. InvLT achieves the best ECE, AdaECE, KDE-ECE, and NLL across all three datasets.

On CIFAR-10, InvLT obtains ECE $= 0.60\%$ and NLL $= 0.15$, both best among all methods. On KDE-ECE, InvLT achieves $0.98\%$, outperforming Histogram Binning at $1.09\%$. On CIFAR-100, InvLT achieves ECE $= 1.67\%$, outperforming the best accuracy-preserving baseline UMNN at $2.48\%$, and matches the lowest NLL ($0.79$). On ImageNet, the margin is largest: InvLT achieves ECE $= 0.39\%$, ahead of the next-best UMNN ($0.56\%$) and far below the best non-preserving baseline, Spline Calibration ($1.10\%$), while also obtaining the best NLL.

Table~\ref{tab:imagenet_r152} expands the ImageNet ResNet-152 evaluation across three calibration set sizes. InvLT consistently achieves the best ECE and NLL at all three sizes, with its largest lead over UMNN, the strongest baseline, in the low-data val\,500 regime. Notably, several class-dependent methods degrade severely at val\,500: Vector Scaling reaches ECE $= 43.44\%$ and Isotonic Regression $33.46\%$, while Matrix Scaling collapses to the identity map (no calibration effect beyond No Cal.); these methods introduce $O(C^2)$ or $O(C)$ parameters that cannot be reliably estimated from 500 samples when $C = 1{,}000$. Standard deviations over seeds are reported in Appendix~\ref{app:std}.

\begin{table*}[htbp]
\centering
\caption{ECE, AdaECE, KDE-ECE (\%$\downarrow$) and NLL ($\downarrow$) on CIFAR-10/100 (ResNet-50, mean over 5 seeds) and ImageNet (ResNet-152, mean over 5 seeds), all at val\,5\,000. Methods above the double rule may change the predicted class; below are accuracy-preserving. \textbf{Bold}: best. \underline{Underline}: second best. $^\dagger$KDE-ECE requires re-fitting the grid-search winner at its exact hyperparameters (not computed by the grid search itself); for Dirichlet on ImageNet ($C{=}1{,}000$, $\sim\!1$M parameters) this re-fit is seed/hardware-unstable (verified against the raw per-seed logs) and is therefore omitted rather than guessed. $^\ddagger$Matrix Scaling's grid-search optimum on ImageNet at this val size converges to the identity map, i.e. no calibration effect beyond No Cal.}
\label{tab:main_v5k}
\small
\setlength{\tabcolsep}{2.5pt}
\begin{tabular}{@{}lcccccccccccc@{}}
\toprule
 & \multicolumn{4}{c}{CIFAR-10}
 & \multicolumn{4}{c}{CIFAR-100}
 & \multicolumn{4}{c}{ImageNet} \\
\cmidrule(lr){2-5}\cmidrule(lr){6-9}\cmidrule(lr){10-13}
Method & ECE & Ada & KDE & NLL
       & ECE & Ada & KDE & NLL
       & ECE & Ada & KDE & NLL \\
\midrule
\multicolumn{13}{l}{\textit{Accuracy may change}} \\
\midrule
No Cal.
  & 3.09 & 3.09 & 4.25 & 0.20
  & 11.88 & 11.85 & 14.25 & 0.98
  & 12.83 & 12.83 & 12.85 & 0.87 \\
Hist.\ Bin.
  & 0.99 & 2.35 & 1.09 & 0.40
  & 5.92 & 8.84 & 6.64 & 2.40
  & 5.62 & 6.87 & 5.87 & 3.49 \\
Isotonic
  & 1.03 & 0.90 & 1.47 & 0.20
  & 6.14 & 6.04 & 6.97 & 1.30
  & 6.84 & 6.84 & 7.62 & 2.97 \\
Vector Sc.
  & 0.91 & 1.18 & 1.41 & 0.16
  & 4.96 & 4.77 & 5.35 & 0.89
  & 10.94 & 10.78 & 11.44 & 2.40 \\
Matrix Sc.
  & 0.89 & 1.13 & 1.42 & 0.16
  & 4.05 & 3.85 & 4.32 & 0.83
  & 12.83$^\ddagger$ & 12.83 & 12.85 & 0.87 \\
Dirichlet
  & 0.91 & 1.14 & 1.43 & 0.16
  & 1.96 & 2.04 & 2.10 & 0.98
  & 3.62 & 3.64 & --$^\dagger$ & 1.00 \\
Spline
  & 1.54 & 1.70 & 2.12 & 0.27
  & 7.74 & 7.66 & 8.55 & 1.17
  & 1.10 & 1.19 & 1.34 & 0.74 \\
\midrule\midrule
\multicolumn{13}{l}{\textit{Accuracy-preserving}} \\
\midrule
TS
  & 1.69 & 1.54 & 1.86 & 0.17
  & 4.67 & 4.64 & 5.17 & 0.91
  & 2.77 & 2.79 & 2.79 & 0.75 \\
ETS
  & 1.64 & 1.48 & 1.81 & 0.17
  & 2.72 & 2.78 & 2.81 & 0.90
  & 2.79 & 2.81 & 2.79 & 0.75 \\
PTS
  & \underline{0.69} & \underline{0.71} & \underline{1.15} & 0.16
  & 2.59 & 2.85 & 2.79 & 0.89
  & 2.66 & 3.44 & 3.16 & 0.74 \\
UMNN
  & 0.66 & 0.74 & 1.10 & \underline{0.15}
  & \underline{2.48} & \underline{2.60} & \underline{2.61} & \underline{0.79}
  & \underline{0.56} & \underline{0.54} & \underline{0.72} & 0.67 \\
\midrule
\textbf{InvLT (ours)}
  & \textbf{0.60} & \textbf{0.61} & \textbf{0.98} & \textbf{0.15}
  & \textbf{1.67} & \textbf{1.88} & \textbf{1.90} & \textbf{0.79}
  & \textbf{0.39} & \textbf{0.50} & \textbf{0.64} & \textbf{0.67} \\
\bottomrule
\end{tabular}
\end{table*}

\begin{table*}[htbp]
\centering
\caption{ECE, AdaECE, KDE-ECE (\%$\downarrow$) and NLL ($\downarrow$) on ImageNet with ResNet-152 under three calibration set sizes. The uncalibrated model has ECE $= 12.83\%$, making this a more challenging calibration setting. Methods above the double rule may change the predicted class; below are accuracy-preserving. $^\dagger$KDE-ECE omitted for Dirichlet/Matrix Scaling on ImageNet at this val size: re-fitting the grid-search winner to obtain KDE-ECE is seed/hardware-unstable for these $C{=}1{,}000$-class, $\sim\!1$M-parameter fits (confirmed against raw per-seed grid logs, which themselves show bimodal ECE across seeds at the reported optimum). $^\ddagger$Matrix Scaling's optimum converges to the identity map at this val size (no calibration effect beyond No Cal.). Dirichlet at val\,25\,000 converges under the current grid (unlike in an earlier draft of this table, which reported non-convergence). \textbf{Bold}: best. \underline{Underline}: second best.}
\label{tab:imagenet_r152}
\small
\setlength{\tabcolsep}{2.5pt}
\begin{tabular}{@{}lcccccccccccc@{}}
\toprule
 & \multicolumn{4}{c}{val\,500}
 & \multicolumn{4}{c}{val\,5\,000}
 & \multicolumn{4}{c}{val\,25\,000} \\
\cmidrule(lr){2-5}\cmidrule(lr){6-9}\cmidrule(lr){10-13}
Method & ECE & Ada & KDE & NLL
       & ECE & Ada & KDE & NLL
       & ECE & Ada & KDE & NLL \\
\midrule
\multicolumn{13}{l}{\textit{Accuracy may change}} \\
\midrule
No Cal.
  & 12.83 & 12.83 & 12.85 & 0.87
  & 12.83 & 12.83 & 12.85 & 0.87
  & 12.83 & 12.83 & 12.85 & 0.87 \\
Hist.\ Bin.
  &  5.16 &  6.20 &  5.80 & 3.05
  &  5.62 &  6.87 &  5.87 & 3.49
  &  6.00 &  6.76 &  5.67 & 2.74 \\
Isotonic
  & 33.46 & 33.43 & 34.87 & 12.54
  &  6.84 &  6.84 &  7.62 &  2.97
  &  3.26 &  3.26 &  3.61 &  1.46 \\
Vector Sc.
  & 43.44 & 43.44 & 43.77 & 23.49
  & 10.94 & 10.78 & 11.44 &  2.40
  &  4.74 &  4.73 &  5.07 &  0.81 \\
Matrix Sc.
  & 12.83$^\ddagger$ & 12.83 & 12.85 &  0.87
  & 12.83$^\ddagger$ & 12.83 & 12.85 &  0.87
  & 10.54 & 10.54 & --$^\dagger$ &  0.86 \\
Dirichlet
  & 13.20 & 13.33 & --$^\dagger$ &  1.31
  &  3.62 &  3.64 & --$^\dagger$ &  1.00
  &  2.66 &  2.80 & --$^\dagger$ &  1.35 \\
Spline
  &  5.23 &  5.19 &  5.34 &  1.82
  &  1.10 &  1.19 &  1.34 &  0.74
  &  1.40 &  1.47 &  1.60 &  0.71 \\
\midrule\midrule
\multicolumn{13}{l}{\textit{Accuracy-preserving}} \\
\midrule
TS
  &  4.89 &  4.88 &  4.89 & 0.77
  &  2.77 &  2.79 &  2.79 & 0.75
  &  6.87 &  6.92 &  6.89 & 0.80 \\
ETS
  &  2.85 &  2.84 &  2.88 & 0.75
  &  2.79 &  2.81 &  2.79 & 0.75
  &  2.81 &  2.78 &  2.77 & 0.75 \\
PTS
  &  3.25 &  3.86 &  3.60 & 0.74
  &  2.66 &  3.44 &  3.16 & 0.74
  &  2.68 &  3.47 &  3.17 & 0.74 \\
UMNN
  & \underline{1.46} & \underline{1.46} & \underline{1.65} & \underline{0.67}
  & \underline{0.56} & \underline{0.54} & \underline{0.72} & 0.67
  & \underline{0.50} & \textbf{0.55} & \underline{0.71} & 0.67 \\
\midrule
\textbf{InvLT (ours)}
  & \textbf{0.98} & \textbf{1.00} & \textbf{1.15} & \textbf{0.67}
  & \textbf{0.39} & \textbf{0.50} & \textbf{0.64} & \textbf{0.67}
  & \textbf{0.42} & \underline{0.56} & \textbf{0.65} & \textbf{0.66} \\
\bottomrule
\end{tabular}
\end{table*}

\begin{table*}[htbp]
\centering
\caption{ECE (\%$\downarrow$) comparison of accuracy-preserving methods on CIFAR-100 across different network architectures (val\,5\,000). \textit{Avg}: mean over the five non-ResNet-50 architectures. \textbf{Bold}: best per column. \underline{Underline}: second best. The ResNet-50 column is averaged over 5 seeds (and grid-searched for PTS/UMNN/InvLT); the other five architecture columns each reflect a single run (seed 42) at default hyperparameters -- no grid search or multi-seed evaluation has been run for them yet.}
\label{tab:arch}
\small
\setlength{\tabcolsep}{5pt}
\begin{tabular}{@{}lccccccc@{}}
\toprule
Method & ResNet-50 & VGG-16 & VGG-19 & DN-121 & WRN & ViT-B/16 & \textit{Avg} \\
\midrule
TS
  & 4.67 & 4.29 & 6.18 & 9.31 & 3.52 & 2.58 & 5.18 \\
ETS
  & 2.72 & 4.29 & 6.53 & 3.12 & \underline{2.60} & 2.64 & 3.84 \\
PTS
  & 2.59 & \underline{4.21} & \underline{3.91} & 2.15 & 2.67 & \underline{2.46} & \underline{3.08} \\
UMNN
  & \underline{2.48} & 5.15 & 5.19 & \underline{1.94} & 2.66 & 2.54 & 3.50 \\
\midrule
\textbf{InvLT (ours)}
  & \textbf{1.67} & \textbf{2.89} & \textbf{3.17} & \textbf{1.87}
  & \textbf{2.31} & \textbf{1.81} & \textbf{2.41} \\
\bottomrule
\end{tabular}
\end{table*}

\subsection{Analysis}
\label{sec:analysis}

\paragraph{Calibration performance.}
Tables~\ref{tab:main_v5k}--\ref{tab:arch} show that InvLT achieves the best ECE and NLL with preserved accuracy across all three datasets and all seven architectures evaluated. On CIFAR-10, InvLT reduces ECE to $0.60\%$, ahead of the next-best accuracy-preserving method (UMNN, $0.66\%$). On CIFAR-100, InvLT achieves ECE $= 1.67\%$ vs.\ UMNN at $2.48\%$, and matches the lowest NLL ($0.79$). On ImageNet the gap is largest: InvLT attains ECE $= 0.39\%$, less than half that of Spline Calibration ($1.10\%$), the best non-preserving baseline. The advantage holds on the more challenging ResNet-152 setting (Table~\ref{tab:imagenet_r152}), where InvLT achieves the best ECE and NLL at all three calibration set sizes, with the largest lead at val\,500. Across six CIFAR-100 architectures (Table~\ref{tab:arch}), InvLT ranks first on every backbone with an average ECE of $2.41\%$ vs.\ $3.08\%$ for the next-best method (PTS). Temperature scaling, which InvLT generalizes, is consistently $2$--$7\times$ worse in ECE, confirming that the additional nonlinear expressiveness is beneficial.

\paragraph{Computational efficiency.}
Table~\ref{tab:efficiency} compares wall-clock fitting and inference times across all methods. VS and TS are the fastest but achieve substantially worse calibration (Table~\ref{tab:main_v5k}); Spline Calibration is the slowest at inference ($9.01$\,s on CIFAR-100, and $88.01$\,s on ImageNet) due to its non-parametric evaluation. The most informative comparison is against UMNN~\citep{wehenkel2019umnn,rahimi2020intra}, the strongest baseline that---like InvLT---applies a monotone scalar function element-wise to logits. InvLT matches or outperforms UMNN on ECE and NLL in every setting while being $3.5\times$ faster to fit ($22.7$\,s vs.\ $80.2$\,s on CIFAR-100) and $5\times$ faster at inference ($74.8$\,ms vs.\ $372.7$\,ms). The gap stems from a fundamental design difference: UMNN enforces strict monotonicity by parameterizing $f'$ as a positive network and computing $f$ via numerical quadrature at every evaluation, whereas InvLT uses a standard network with a lightweight reconstruction penalty that is discarded at test time.

\begin{table}[htbp]
  \centering
  \caption{Wall-clock efficiency across datasets. Fit: 1{,}000 iterations. Inference: 10{,}000 samples. All timings on a single CPU.}
  \label{tab:efficiency}
  \small
  \begin{tabular}{@{}llccccccc@{}}
    \toprule
    Dataset & Phase & TS & VS & PTS & IR & Spline & UMNN & \textbf{InvLT (ours)} \\
    \midrule
    \multirow{2}{*}{CIFAR-10}  & Fit       & 4.49\,s & 3.24\,s & 5.93\,s  & 15.8\,ms  & 57.7\,ms  & 33.16\,s  & 12.97\,s  \\
                               & Inference & 1.4\,ms & 1.1\,ms & 3.2\,ms  & 7.6\,ms   & 935.4\,ms & 53.6\,ms  & 6.9\,ms   \\
    \midrule
    \multirow{2}{*}{CIFAR-100} & Fit       & 3.19\,s & 2.58\,s & 8.49\,s  & 102.4\,ms & 687.6\,ms & 80.20\,s  & 22.70\,s  \\
                               & Inference & 0.6\,ms & 1.4\,ms & 12.1\,ms & 90.1\,ms  & 9.01\,s   & 372.7\,ms & 74.8\,ms  \\
    \midrule
    \multirow{2}{*}{ImageNet}  & Fit       & 3.54\,s & 3.33\,s & 17.14\,s & 799.9\,ms & 6.19\,s   & 570.46\,s & 72.56\,s  \\
                               & Inference & 2.9\,ms & 8.0\,ms & 120.7\,ms& 987.5\,ms & 88.01\,s  & 3.94\,s   & 516.3\,ms \\
    \bottomrule
  \end{tabular}
\end{table}


\paragraph{Robustness to calibration set size.}
Figure~\ref{fig:learned_f} visualizes the learned scalar function $f_\theta$ on three datasets, overlaying two calibration set sizes (val\,1\,000 and val\,5\,000). The val\,1\,000 and val\,5\,000 curves are nearly indistinguishable on every dataset, demonstrating that InvLT learns a stable transformation even from limited calibration data. All learned maps are strictly monotone, confirming that the reconstruction regularizer successfully prevents non-monotone solutions. The learned maps deviate substantially from the identity in a dataset-dependent manner: on CIFAR-10, $f_\theta$ compresses high logits, reducing overconfidence on the top class; on CIFAR-100 and ImageNet, $f_\theta$ amplifies positive logits relative to negative ones, sharpening the distribution around the predicted class. This nonlinear reshaping---which temperature scaling cannot express---explains InvLT's calibration gains.

\begin{figure}[htbp]
\centering
\includegraphics[width=\textwidth]{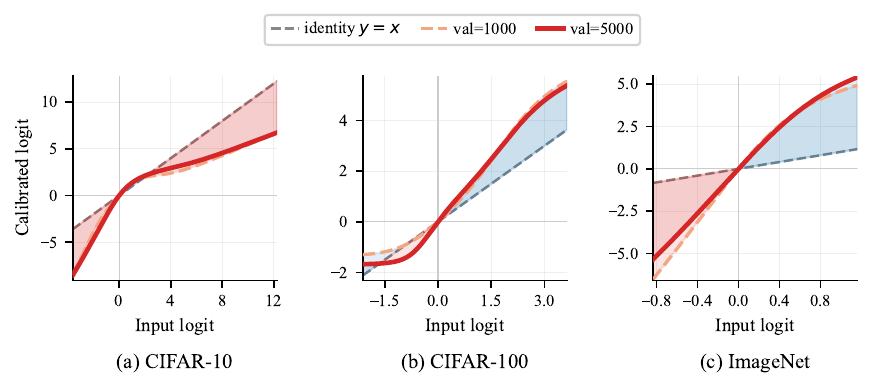}
\caption{Learned scalar transformation $f_\theta$ on (a)~CIFAR-10, (b)~CIFAR-100, and (c)~ImageNet (ResNet-50). Orange dashed: $f_\theta$ fitted on val\,1\,000; red solid: $f_\theta$ fitted on val\,5\,000. The near-perfect overlap confirms that InvLT is robust to calibration set size. Gray dashed: identity $y = x$; shaded regions: deviation from identity.}
\label{fig:learned_f}
\end{figure}

\paragraph{Reliability diagrams.}
Figure~\ref{fig:reliability} shows reliability diagrams for eight methods on CIFAR-10 (ResNet-50, val\,5\,000). A perfectly calibrated model aligns with the diagonal; the gap between the bars and the diagonal indicates miscalibration. InvLT achieves the closest alignment to the perfect-calibration diagonal, with uniformly small gaps across confidence bins. In contrast, TS and ETS show systematic overconfidence in the mid-confidence range, while Histogram Binning and Isotonic Regression exhibit larger irregular gaps.

\begin{figure}[htbp]
\centering
\includegraphics[width=\textwidth]{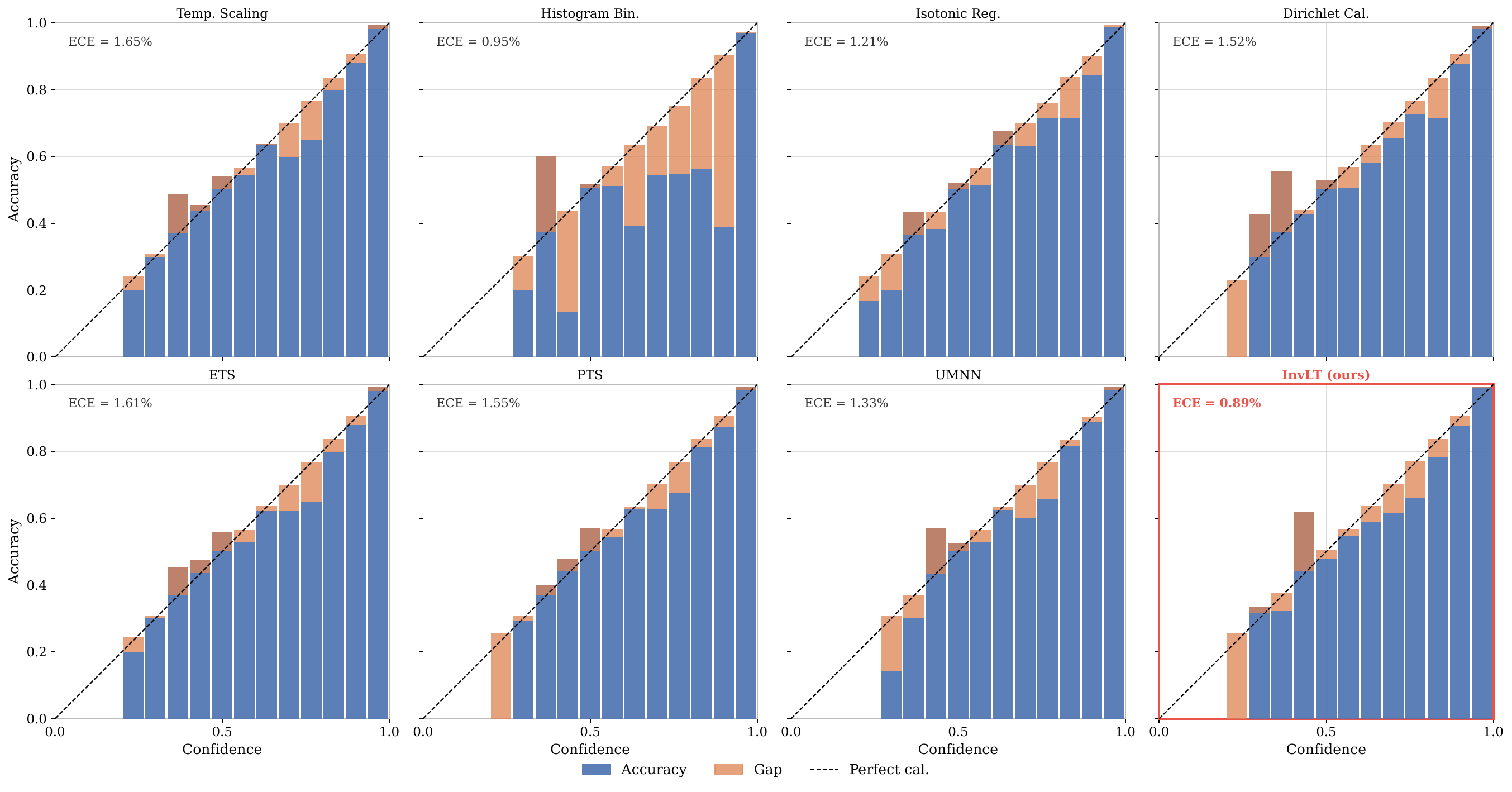}
\caption{Reliability diagrams for eight post-hoc calibration methods on CIFAR-10 (ResNet-50, val\,5\,000). Blue bars show empirical accuracy per confidence bin; orange overlays show calibration gaps. The dashed diagonal represents perfect calibration. InvLT (ours) achieves the lowest ECE with consistently small gaps across all confidence levels.}
\label{fig:reliability}
\end{figure}

\subsection{Ablation Study}
\label{sec:ablation}

The reconstruction regularizer is the central design choice in InvLT: it softly enforces monotonicity of $f_\theta$ and thereby preserves the classifier's original predictions. We ablate its effect by comparing InvLT trained with and without the reconstruction loss on CIFAR-10 (ResNet-50, val\,5\,000).

Table~\ref{tab:ablation-lambda} summarizes the results. Without the reconstruction loss, classification error increases ($4.52\% \to 4.69\%$), confirming that an unregularized $f_\theta$ can become non-monotone and alter the argmax. With the reconstruction loss enabled, InvLT achieves competitive ECE while classification error returns to the original model's accuracy (same error, $4.52\%$), verifying that the regularizer successfully enforces monotonicity. Importantly, the regularizer does not trade off calibration quality for accuracy preservation: both ECE and NLL also improve, suggesting that the invertibility constraint acts as a beneficial inductive bias that guides $f_\theta$ toward well-behaved calibration maps rather than merely preventing degenerate solutions.

We also ablate other design choices---the number of reconstruction samples $K$, warmup delay, network depth and width, and clipping range---in Appendix~\ref{app:ablation}. The main finding is that InvLT is robust to these choices once the reconstruction constraint is sufficiently strong and densely sampled.

\begin{table}[htbp]
  \centering
  \caption{Effect of the reconstruction regularizer on CIFAR-10 (ResNet-50, val\,5\,000, seed 0). Without regularization, classification error increases; enabling it preserves the original accuracy while also improving calibration.}
  \label{tab:ablation-lambda}
  \small
  \begin{tabular}{@{}lcccc@{}}
    \toprule
    Setting & ECE (\%$\downarrow$) & AdaECE (\%$\downarrow$) & NLL ($\downarrow$) & Error (\%) \\
    \midrule
    No cal. & 3.05 & 3.04 & 0.21 & 4.52 \\
    Without reg. & 0.94 & 0.93 & 0.16 & 4.69 \\
    With reg.    & 0.92 & 0.83 & 0.15 & 4.52 \\
    \bottomrule
  \end{tabular}
\end{table}

\section{Discussion and Conclusion}

The element-wise design of InvLT is both a strength and a limitation: sharing a single scalar function across all logit dimensions makes the parameter count independent of $C$ and scales gracefully to ImageNet ($C = 1{,}000$), where class-dependent methods such as matrix scaling and Dirichlet calibration ($O(C^2)$ parameters) degrade severely (Table~\ref{tab:main_v5k}). However, this design cannot capture cross-class dependencies in the miscalibration structure. A natural extension would be to introduce lightweight cross-class interactions, though this risks overfitting on small calibration sets. Additionally, the soft monotonicity enforcement does not formally guarantee argmax preservation; in extremely safety-critical settings, a post-hoc argmax check may be advisable.

A central design choice is the use of soft rather than hard monotonicity. UMNN enforces strict monotonicity via numerical integration, which guarantees argmax preservation by construction but incurs substantial training and inference overhead. InvLT's reconstruction regularizer provides no such formal guarantee, yet accuracy is preserved in every experiment we report (Section~\ref{sec:ablation}). The practical advantage is twofold: it avoids the computational cost of quadrature, and it allows $f$ to be any standard network with any activation function, simplifying implementation. At inference the auxiliary network is discarded, adding negligible cost over the original classifier.

In summary, InvLT offers a simple, efficient, and flexible approach to post-hoc calibration: a shared scalar network applied element-wise to logits, with monotonicity softly enforced via a paired inverse network that is discarded at test time. Across CIFAR-10, CIFAR-100, and ImageNet with seven architectures, InvLT consistently achieves the best ECE and NLL among all compared methods, and preserves accuracy while being substantially faster to train than UMNN. Future work includes exploring hard monotonicity constraints via non-negative weights, incorporating cross-class interactions, and extending evaluation to domains beyond image classification.

\begin{ack}
Use unnumbered first level headings for the acknowledgments. All acknowledgments go at the end of the paper before the list of references. Moreover, you are required to declare funding (financial activities supporting the submitted work) and competing interests (related financial activities outside the submitted work). More information about this disclosure can be found at: \url{https://neurips.cc/Conferences/2026/PaperInformation/FundingDisclosure}.

Do {\bf not} include this section in the anonymized submission, only in the final paper. You can use the \texttt{ack} environment provided in the style file to automatically hide this section in the anonymized submission.
\end{ack}

\bibliographystyle{plainnat}
\bibliography{references}


\appendix

\section{Proofs}
\label{app:proof}

\begin{proof}[Proof of Proposition~\ref{thm:mono}]
Suppose for contradiction that $f$ is injective and continuous on $I$ but not strictly monotone. Then there exist $a < b < c$ in $I$ with (w.l.o.g.) $f(a) < f(b)$ and $f(b) > f(c)$. Pick $v \in (\max\{f(a), f(c)\},\, f(b))$. By the Intermediate Value Theorem applied to $[a, b]$, there exists $x_1 \in (a,b)$ with $f(x_1) = v$. Similarly, applied to $[b, c]$, there exists $x_2 \in (b,c)$ with $f(x_2) = v$. Since $x_1 \neq x_2$ but $f(x_1) = f(x_2) = v$, this contradicts injectivity.
\end{proof}


\section{Hyperparameter Settings}
\label{app:hparams}

Every tunable calibrator's configuration below is the winner of a grid search, selected by lowest 5-seed-mean test ECE, at each (dataset, val size) reported in Tables~\ref{tab:main_v5k}--\ref{tab:imagenet_r152}. CIFAR-10/100 are grid-searched at val\,500 and val\,5\,000; ImageNet at val\,500, val\,5\,000, and val\,25\,000. No Cal., Temperature Scaling, ETS, Isotonic Regression, and Vector Scaling have no tunable hyperparameters (Vector Scaling fits its per-class scale/bias by NLL, matching the class default) and are omitted.

\begin{table}[h]
\centering
\caption{InvLT (ours): grid-search-selected architecture, activation, and fitting loss. Fixed across all settings (not searched): learning rate $\eta=10^{-3}$, max iterations $10{,}000$, warmup $t_0=100$, reconstruction samples $K=100$, $\lambda=0.01$.}
\label{tab:hparam-invlt}
\begin{tabular}{llcccc}
\toprule
Dataset & val size & Hidden dims & Activation & Loss & Batch $B$ \\
\midrule
CIFAR-10  & 500      & $(8,8)$      & Tanh & NLL & 500 \\
CIFAR-10  & 5{,}000  & $(16,16)$    & Tanh & NLL & 500 \\
CIFAR-100 & 500      & $(16,16,16)$ & Tanh & NLL & 1000 \\
CIFAR-100 & 5{,}000  & $(24,24)$    & Tanh & NLL & 1000 \\
ImageNet  & 500      & $(8,8)$      & ReLU & NLL & 500$^\dagger$ \\
ImageNet  & 5{,}000  & $(24,24)$    & ReLU & NLL & 1000 \\
ImageNet  & 25{,}000 & $(16,16,16)$ & ReLU & NLL & 1000 \\
\bottomrule
\end{tabular}

\vspace{0.3em}
{\footnotesize $^\dagger$At ImageNet val\,500, batch size 500 (searched as an override of the dataset default of 1000) wins on 5-seed mean test ECE; at all other settings the dataset default is what won.}
\end{table}

\begin{table}[h]
\centering
\caption{PTS: grid-search-selected architecture, activation, and fitting loss.}
\label{tab:hparam-pts}
\begin{tabular}{llccc}
\toprule
Dataset & val size & Hidden dims & Activation & Loss \\
\midrule
CIFAR-10  & 500      & $(16,16)$    & SiLU & Brier \\
CIFAR-10  & 5{,}000  & $(16,16)$    & ReLU & NLL \\
CIFAR-100 & 500      & $(8,8)$      & Tanh & Brier \\
CIFAR-100 & 5{,}000  & $(16,16,16)$ & Tanh & Brier \\
ImageNet  & 500      & $(24,24)$    & ReLU & Brier \\
ImageNet  & 5{,}000  & $(24,24)$    & SiLU & Brier \\
ImageNet  & 25{,}000 & $(16,16,16)$ & ReLU & Brier \\
\bottomrule
\end{tabular}
\end{table}

\begin{table}[h]
\centering
\caption{UMNN: grid-search-selected architecture, fitting loss, and number of integration steps $K$.}
\label{tab:hparam-umnn}
\begin{tabular}{llccc}
\toprule
Dataset & val size & Hidden dims & Loss  \\
\midrule
CIFAR-10  & 500      & $(16,16,16)$ & NLL \\
CIFAR-10  & 5{,}000  & $(8,8)$      & NLL  \\
CIFAR-100 & 500      & $(16,16,16)$ & NLL \\
CIFAR-100 & 5{,}000  & $(16,16,16)$ & NLL  \\
ImageNet  & 500      & $(8,8)$      & NLL  \\
ImageNet  & 5{,}000  & $(16,16)$    & NLL  \\
ImageNet  & 25{,}000 & $(24,24)$    & NLL  \\
\bottomrule
\end{tabular}
\end{table}

\begin{table}[h]
\centering
\caption{Histogram Binning (number of bins) and Spline Calibration (number of knots): grid-search-selected configurations. Only CIFAR-10/100 at val\,5\,000 and ImageNet at all three val sizes are used in the main tables.}
\label{tab:hparam-hb-spline}
\begin{tabular}{llcc}
\toprule
Dataset & val size & Hist.\ Bin.\ (bins) & Spline (knots) \\
\midrule
CIFAR-10  & 5{,}000  & 10 & 9 \\
CIFAR-100 & 5{,}000  & 10 & 9 \\
ImageNet  & 500      & 10 & 9 \\
ImageNet  & 5{,}000  & 10 & 9 \\
ImageNet  & 25{,}000 & 10 & 3 \\
\bottomrule
\end{tabular}

\vspace{0.3em}
{\footnotesize Histogram Binning's ECE is bin-count-invariant here (10/15/20 bins give bit-identical ECE), so the reported ``winner'' is arbitrary among ties.}
\end{table}

\begin{table}[h]
\centering
\caption{Matrix Scaling and Dirichlet Calibration: grid-search-selected regularization strength ($\lambda=\mu$ on the diagonal) and fitting loss. $^\ddagger$On ImageNet, both methods are searched only over the tied diagonal $\lambda=\mu$ (8 configs), not the full $\lambda\times\mu$ cross product used for CIFAR-10/100 (32 configs) -- the full cross product does not fit the compute budget at $C=1{,}000$ classes.}
\label{tab:hparam-ms-dir}
\begin{tabular}{llcccc}
\toprule
 & & \multicolumn{2}{c}{Matrix Sc.} & \multicolumn{2}{c}{Dirichlet} \\
Dataset & val size & $\lambda,\mu$ & Loss & $\lambda,\mu$ & Loss \\
\midrule
CIFAR-10  & 5{,}000       & $0.1,\,100$ & CE & $0.1,\,0.1$ & CE \\
CIFAR-100 & 5{,}000       & $100,\,0.1$ & CE & $10,\,0.1$ & Brier \\
ImageNet$^\ddagger$ & 500      & $10,\,10$ & Brier & $100,\,100$ & Brier \\
ImageNet$^\ddagger$ & 5{,}000  & $10,\,10$ & Brier & $1,\,1$ & Brier \\
ImageNet$^\ddagger$ & 25{,}000 & $100,\,100$ & CE & $1,\,1$ & Brier \\
\bottomrule
\end{tabular}

\vspace{0.3em}
{\footnotesize Matrix Scaling on ImageNet at val\,500/val\,5\,000 selects Brier loss because cross-entropy catastrophically overfits the full $1{,}000\times 1{,}000$ matrix at these val sizes (CE-loss ECE $>20\%$ vs.\ Brier's $\approx\!12.8\%$, see Table~\ref{tab:imagenet_r152}); the resulting Brier-loss fit is statistically indistinguishable from the identity map at those two val sizes.}
\end{table}

\section{Results with Standard Deviations}
\label{app:std}

Tables~\ref{tab:main_v5k} and~\ref{tab:arch} report means over five random seeds for the CIFAR experiments. Table~\ref{tab:std_cifar} provides the corresponding results with standard deviations for all accuracy-preserving methods on CIFAR-10 and CIFAR-100 (ResNet-50, val\,5\,000). InvLT achieves the lowest mean across all metrics on both datasets while maintaining low variance, confirming the stability of the method. Notably, TS on CIFAR-100 exhibits very high variance (ECE std $= 4.08$), driven by a single seed on which the fitted temperature is a poor fit for that seed's logit distribution; ETS/PTS/UMNN/InvLT, which have more flexibility to correct such seeds, do not show the same blowup. Table~\ref{tab:imagenet_r152_std} shows ImageNet's corresponding results with standard deviations.

\newcommand{\val}[2]{#1{\scriptstyle\,\pm\,#2}}

\begin{table*}[htbp]
\centering
\caption{ECE, AdaECE, KDE-ECE (\%$\downarrow$) and NLL ($\downarrow$) on CIFAR-10 and CIFAR-100 (ResNet-50, val\,5\,000, mean $\pm$ std over 5 seeds, accuracy-preserving methods). \textbf{Bold}: best mean.}
\label{tab:std_cifar}
\small
\setlength{\tabcolsep}{3.5pt}
\begin{tabular}{@{}lcccccccc@{}}
\toprule
 & \multicolumn{4}{c}{CIFAR-10} & \multicolumn{4}{c}{CIFAR-100} \\
\cmidrule(lr){2-5}\cmidrule(lr){6-9}
Method & ECE & AdaECE & KDE & NLL & ECE & AdaECE & KDE & NLL \\
\midrule
TS   & $\val{1.69}{.40}$ & $\val{1.54}{.32}$ & $\val{1.86}{.29}$ & $\val{.169}{.003}$
     & $\val{4.67}{4.08}$ & $\val{4.64}{4.09}$ & $\val{5.17}{5.12}$ & $\val{.912}{.042}$ \\
ETS  & $\val{1.64}{.30}$ & $\val{1.48}{.27}$ & $\val{1.81}{.24}$ & $\val{.169}{.003}$
     & $\val{2.72}{.50}$ & $\val{2.78}{.38}$ & $\val{2.81}{.34}$ & $\val{.895}{.012}$ \\
PTS  & $\val{0.69}{.45}$ & $\val{0.71}{.42}$ & $\val{1.15}{.55}$ & $\val{.163}{.006}$
     & $\val{2.59}{.27}$ & $\val{2.85}{.22}$ & $\val{2.79}{.24}$ & $\val{.888}{.015}$ \\
UMNN & $\val{0.66}{.18}$ & $\val{0.74}{.25}$ & $\val{1.10}{.20}$ & $\val{.153}{.003}$
     & $\val{2.48}{.50}$ & $\val{2.60}{.58}$ & $\val{2.61}{.45}$ & $\val{.793}{.012}$ \\
\midrule
\textbf{InvLT} & $\mathbf{\val{0.60}{.03}}$ & $\mathbf{\val{0.61}{.21}}$ & $\mathbf{\val{0.98}{.12}}$ & $\mathbf{\val{.152}{.003}}$
     & $\mathbf{\val{1.67}{.63}}$ & $\mathbf{\val{1.88}{.74}}$ & $\mathbf{\val{1.90}{.59}}$ & $\mathbf{\val{.793}{.012}}$ \\
\bottomrule
\end{tabular}
\end{table*}

\begin{table*}[htbp]
\centering
\caption{ECE, AdaECE, KDE-ECE (\%$\downarrow$) and NLL ($\downarrow$) on ImageNet with ResNet-152 under three calibration set sizes (mean$\pm$std over 5 seeds). The uncalibrated model has ECE $= 12.83\%$, making this a more challenging calibration setting. Methods above the double rule may change the predicted class; below are accuracy-preserving. $^\dagger$KDE-ECE omitted for Dirichlet/Matrix Scaling: re-fitting the grid-search winner to obtain KDE-ECE is seed/hardware-unstable for these $C{=}1{,}000$-class, $\sim\!1$M-parameter fits. $^\ddagger$Matrix Scaling's optimum converges to the identity map at this val size (KDE-ECE therefore equals No Cal.'s exactly). \textbf{Bold}: best. \underline{Underline}: second best.}
\label{tab:imagenet_r152_std}
\scriptsize
\setlength{\tabcolsep}{4pt}

\begin{subtable}{\textwidth}
\centering
\caption{Calibration set size: val\,500}
\label{tab:imagenet_r152_val500}
\begin{tabular}{@{}lcccc@{}}
\toprule
Method & ECE & AdaECE & KDE-ECE & NLL \\
\midrule
\multicolumn{5}{l}{\textit{Accuracy may change}} \\
\midrule
No Cal.     & $\val{12.83}{.19}$ & $\val{12.83}{.19}$ & $\val{12.85}{.19}$ & $\val{.87}{.00}$ \\
Hist.\ Bin. & $\val{5.16}{.60}$  & $\val{6.18}{.23}$  & $\val{5.80}{.40}$  & $\val{3.05}{.03}$ \\
Isotonic    & $\val{33.46}{7.47}$& $\val{33.43}{7.48}$& $\val{34.87}{6.71}$& $\val{12.54}{1.72}$ \\
Vector Sc.  & $\val{43.44}{10.57}$&$\val{43.44}{10.57}$&$\val{43.77}{12.42}$&$\val{23.49}{2.51}$ \\
Matrix Sc.  & $\val{12.83}{.19}^\ddagger$ & $\val{12.83}{.19}$ & $\val{12.85}{.19}$ & $\val{.87}{.00}$ \\
Dirichlet   & $\val{13.20}{1.02}$& $\val{13.33}{.85}$ & --$^\dagger$       & $\val{1.31}{.06}$ \\
Spline      & $\val{5.23}{1.48}$ & $\val{5.19}{1.54}$ & $\val{5.34}{1.39}$ & $\val{1.82}{.05}$ \\
\midrule\midrule
\multicolumn{5}{l}{\textit{Accuracy-preserving}} \\
\midrule
TS    & $\val{4.89}{4.59}$ & $\val{4.88}{4.59}$ & $\val{4.89}{4.59}$ & $\val{.77}{.05}$ \\
ETS   & $\val{2.85}{.23}$  & $\val{2.84}{.19}$  & $\val{2.88}{.14}$  & $\val{.75}{.00}$ \\
PTS   & $\val{3.25}{.59}$  & $\val{3.86}{.33}$  & $\val{3.60}{.40}$  & $\val{.74}{.01}$ \\
UMNN  & $\underline{\val{1.46}{.72}}$ & $\underline{\val{1.46}{.70}}$ & $\underline{\val{1.65}{.76}}$ & $\mathbf{\val{.67}{.01}}$ \\
\midrule
\textbf{InvLT (ours)}
      & $\mathbf{\val{.98}{.43}}$ & $\mathbf{\val{1.00}{.44}}$ & $\mathbf{\val{1.15}{.48}}$ & $\underline{\val{.67}{.01}}$ \\
\bottomrule
\end{tabular}
\end{subtable}

\vspace{1em}

\begin{subtable}{\textwidth}
\centering
\caption{Calibration set size: val\,5\,000}
\label{tab:imagenet_r152_val5k}
\begin{tabular}{@{}lcccc@{}}
\toprule
Method & ECE & AdaECE & KDE-ECE & NLL \\
\midrule
\multicolumn{5}{l}{\textit{Accuracy may change}} \\
\midrule
No Cal.     & $\val{12.83}{.19}$ & $\val{12.83}{.19}$ & $\val{12.85}{.19}$ & $\val{.87}{.00}$ \\
Hist.\ Bin. & $\val{5.61}{.41}$  & $\val{6.93}{.25}$  & $\val{5.87}{.19}$  & $\val{3.49}{.05}$ \\
Isotonic    & $\val{6.84}{.25}$  & $\val{6.84}{.25}$  & $\val{7.62}{.28}$  & $\val{2.97}{.04}$ \\
Vector Sc.  & $\val{10.94}{.31}$ & $\val{10.78}{.35}$ & $\val{11.44}{.33}$ & $\val{2.40}{.08}$ \\
Matrix Sc.  & $\val{12.83}{.19}^\ddagger$ & $\val{12.83}{.19}$ & $\val{12.85}{.19}$ & $\val{.87}{.00}$ \\
Dirichlet   & $\val{3.62}{2.67}$ & $\val{3.64}{2.54}$ & --$^\dagger$       & $\val{1.00}{.07}$ \\
Spline      & $\val{1.10}{.09}$  & $\val{1.19}{.09}$  & $\val{1.34}{.13}$  & $\val{.74}{.00}$ \\
\midrule\midrule
\multicolumn{5}{l}{\textit{Accuracy-preserving}} \\
\midrule
TS    & $\val{2.77}{.21}$ & $\val{2.79}{.18}$ & $\val{2.79}{.13}$ & $\val{.75}{.01}$ \\
ETS   & $\val{2.79}{.23}$ & $\val{2.81}{.18}$ & $\val{2.79}{.14}$ & $\val{.75}{.00}$ \\
PTS   & $\val{2.66}{.14}$ & $\val{3.44}{.26}$ & $\val{3.16}{.22}$ & $\val{.74}{.00}$ \\
UMNN  & $\underline{\val{.56}{.11}}$ & $\underline{\val{.54}{.04}}$ & $\underline{\val{.72}{.14}}$ & $\underline{\val{.67}{.00}}$ \\
\midrule
\textbf{InvLT (ours)}
      & $\mathbf{\val{.39}{.18}}$ & $\mathbf{\val{.50}{.10}}$ & $\mathbf{\val{.64}{.10}}$ & $\mathbf{\val{.67}{.00}}$ \\
\bottomrule
\end{tabular}
\end{subtable}

\vspace{1em}

\begin{subtable}{\textwidth}
\centering
\caption{Calibration set size: val\,25\,000}
\label{tab:imagenet_r152_val25k}
\begin{tabular}{@{}lcccc@{}}
\toprule
Method & ECE & AdaECE & KDE-ECE & NLL \\
\midrule
\multicolumn{5}{l}{\textit{Accuracy may change}} \\
\midrule
No Cal.     & $\val{12.83}{.19}$ & $\val{12.83}{.19}$ & $\val{12.85}{.19}$ & $\val{.87}{.00}$ \\
Hist.\ Bin. & $\val{6.00}{.08}$  & $\val{6.76}{.19}$  & $\val{5.67}{.09}$  & $\val{2.74}{.05}$ \\
Isotonic    & $\val{3.26}{.11}$  & $\val{3.26}{.11}$  & $\val{3.61}{.13}$  & $\val{1.46}{.05}$ \\
Vector Sc.  & $\val{4.74}{.12}$  & $\val{4.73}{.12}$  & $\val{5.07}{.14}$  & $\val{.81}{.01}$ \\
Matrix Sc.  & $\val{10.54}{2.24}$& $\val{10.54}{2.24}$& --$^\dagger$       & $\val{.86}{.01}$ \\
Dirichlet   & $\val{2.66}{.23}$  & $\val{2.80}{.20}$  & --$^\dagger$       & $\val{1.35}{.10}$ \\
Spline      & $\val{1.40}{.13}$  & $\val{1.47}{.08}$  & $\val{1.60}{.15}$  & $\val{.71}{.00}$ \\
\midrule\midrule
\multicolumn{5}{l}{\textit{Accuracy-preserving}} \\
\midrule
TS    & $\val{6.87}{5.60}$ & $\val{6.92}{5.55}$ & $\val{6.89}{5.59}$ & $\val{.80}{.06}$ \\
ETS   & $\val{2.81}{.16}$  & $\val{2.78}{.16}$  & $\val{2.77}{.11}$  & $\val{.75}{.00}$ \\
PTS   & $\val{2.68}{.17}$  & $\val{3.47}{.22}$  & $\val{3.17}{.20}$  & $\val{.74}{.00}$ \\
UMNN  & $\underline{\val{.50}{.18}}$ & $\mathbf{\val{.55}{.16}}$ & $\underline{\val{.71}{.15}}$ & $\underline{\val{.67}{.00}}$ \\
\midrule
\textbf{InvLT (ours)}
      & $\mathbf{\val{.42}{.18}}$ & $\underline{\val{.56}{.08}}$ & $\mathbf{\val{.65}{.12}}$ & $\mathbf{\val{.66}{.00}}$ \\
\bottomrule
\end{tabular}
\end{subtable}

\end{table*}

\section{Additional Ablations}
\label{app:ablation}

Table~\ref{tab:ablation-others} ablates remaining design choices on CIFAR-10 (ResNet-50, val $= 500$). The number of reconstruction samples $K$ controls a trade-off: smaller $K$ achieves slightly better ECE but at the cost of accuracy preservation; $K \ge 25$ ensures accuracy preservation while maintaining competitive calibration. The warmup delay has negligible effect, indicating that the reconstruction loss does not interfere with early optimization. MLP architecture, residual connections, and clipping range are selected per dataset via grid search on the validation set; we find that performance is generally robust across reasonable settings.

\begin{table}[!htbp]
  \centering
  \caption{Ablation on InvLT design choices (CIFAR-10, val $= 500$, ResNet-50). $\dagger$~marks the default used in all other experiments.}
  \label{tab:ablation-others}
  \small
  \setlength{\tabcolsep}{6pt}
  \begin{tabular}{@{}llccc@{}}
    \toprule
    Component & Setting
    & ECE\,(\%$\downarrow$)
    & Error\,(\%$\downarrow$)
    & NLL\,($\downarrow$) \\
    \midrule
    \multirow{5}{*}{Rec.\ samples $K$}
      & 5               & 0.86 & 4.88 & 0.163 \\
      & 10              & 0.92 & 4.72 & 0.157 \\
      & 25              & 1.14 & 4.55 & 0.156 \\
      & $100^{\dagger}$ & 1.17 & 4.53 & 0.156 \\
      & 500             & 1.17 & 4.53 & 0.156 \\
    \midrule
    \multirow{3}{*}{Warmup delay $\tau$}
      & 0                  & 1.17 & 4.54 & 0.156 \\
      & $100^{\dagger}$    & 1.17 & 4.53 & 0.156 \\
      & 1000               & 1.16 & 4.54 & 0.156 \\
    \bottomrule
  \end{tabular}
\end{table}


\newpage
\input{checklist.tex}

\end{document}

%% file: checklist.tex
\clearpage
\section*{NeurIPS Paper Checklist}

\begin{enumerate}

\item {\bf Claims}
    \item[] Question: Do the main claims made in the abstract and introduction accurately reflect the paper's contributions and scope?
    \item[] Answer: \answerYes{} 
    \item[] Justification: The abstract and Section 1 state three concrete contributions—(i) a shared scalar MLP applied element-wise to logits with parameter count independent of C, (ii) a reconstruction-based soft monotonicity regularizer, and (iii) consistent improvements over post-hoc baselines—each of which is substantiated in Sections 3 and 4 (Tables 1–4, Figure 1).
    \item[] Guidelines:
    \begin{itemize}
        \item The answer \answerNA{} means that the abstract and introduction do not include the claims made in the paper.
        \item The abstract and/or introduction should clearly state the claims made, including the contributions made in the paper and important assumptions and limitations. A \answerNo{} or \answerNA{} answer to this question will not be perceived well by the reviewers. 
        \item The claims made should match theoretical and experimental results, and reflect how much the results can be expected to generalize to other settings. 
        \item It is fine to include aspirational goals as motivation as long as it is clear that these goals are not attained by the paper. 
    \end{itemize}

\item {\bf Limitations}
    \item[] Question: Does the paper discuss the limitations of the work performed by the authors?
    \item[] Answer: \answerYes{} 
    \item[] Justification: Limitations are discussed at the end of Section 5: the element-wise design cannot capture cross-class miscalibration dependencies, and the soft monotonicity enforcement does not provide a formal argmax-preservation guarantee. We note that combining InvLT with a post-hoc argmax check may be advisable in safety-critical settings.
    \item[] Guidelines:
    \begin{itemize}
        \item The answer \answerNA{} means that the paper has no limitation while the answer \answerNo{} means that the paper has limitations, but those are not discussed in the paper. 
        \item The authors are encouraged to create a separate ``Limitations'' section in their paper.
        \item The paper should point out any strong assumptions and how robust the results are to violations of these assumptions (e.g., independence assumptions, noiseless settings, model well-specification, asymptotic approximations only holding locally). The authors should reflect on how these assumptions might be violated in practice and what the implications would be.
        \item The authors should reflect on the scope of the claims made, e.g., if the approach was only tested on a few datasets or with a few runs. In general, empirical results often depend on implicit assumptions, which should be articulated.
        \item The authors should reflect on the factors that influence the performance of the approach. For example, a facial recognition algorithm may perform poorly when image resolution is low or images are taken in low lighting. Or a speech-to-text system might not be used reliably to provide closed captions for online lectures because it fails to handle technical jargon.
        \item The authors should discuss the computational efficiency of the proposed algorithms and how they scale with dataset size.
        \item If applicable, the authors should discuss possible limitations of their approach to address problems of privacy and fairness.
        \item While the authors might fear that complete honesty about limitations might be used by reviewers as grounds for rejection, a worse outcome might be that reviewers discover limitations that aren't acknowledged in the paper. The authors should use their best judgment and recognize that individual actions in favor of transparency play an important role in developing norms that preserve the integrity of the community. Reviewers will be specifically instructed to not penalize honesty concerning limitations.
    \end{itemize}

\item {\bf Theory assumptions and proofs}
    \item[] Question: For each theoretical result, does the paper provide the full set of assumptions and a complete (and correct) proof?
    \item[] Answer: \answerYes{} 
    \item[] Justification: The two theoretical statements (Theorem 1 on continuous injective functions being strictly monotone, and Corollary 2 on argmax preservation) are stated in Section 3.2 with full proofs given in Appendix A. Assumptions (continuity on an interval, existence of a left inverse) are explicit in the statements.
    \item[] Guidelines:
    \begin{itemize}
        \item The answer \answerNA{} means that the paper does not include theoretical results. 
        \item All the theorems, formulas, and proofs in the paper should be numbered and cross-referenced.
        \item All assumptions should be clearly stated or referenced in the statement of any theorems.
        \item The proofs can either appear in the main paper or the supplemental material, but if they appear in the supplemental material, the authors are encouraged to provide a short proof sketch to provide intuition. 
        \item Inversely, any informal proof provided in the core of the paper should be complemented by formal proofs provided in appendix or supplemental material.
        \item Theorems and Lemmas that the proof relies upon should be properly referenced. 
    \end{itemize}

    \item {\bf Experimental result reproducibility}
    \item[] Question: Does the paper fully disclose all the information needed to reproduce the main experimental results of the paper to the extent that it affects the main claims and/or conclusions of the paper (regardless of whether the code and data are provided or not)?
    \item[] Answer: \answerYes{} 
    \item[] Justification: Section 4.1 specifies datasets (CIFAR-10/100, ImageNet), backbones (ResNet-50/152, VGG-16/19, DenseNet-121, Wide ResNet, ViT-B/16), calibration set sizes, metrics, and baselines. All InvLT hyperparameters are provided in Appendix B (Table 6). 
    \item[] Guidelines:
    \begin{itemize}
        \item The answer \answerNA{} means that the paper does not include experiments.
        \item If the paper includes experiments, a \answerNo{} answer to this question will not be perceived well by the reviewers: Making the paper reproducible is important, regardless of whether the code and data are provided or not.
        \item If the contribution is a dataset and\slash or model, the authors should describe the steps taken to make their results reproducible or verifiable. 
        \item Depending on the contribution, reproducibility can be accomplished in various ways. For example, if the contribution is a novel architecture, describing the architecture fully might suffice, or if the contribution is a specific model and empirical evaluation, it may be necessary to either make it possible for others to replicate the model with the same dataset, or provide access to the model. In general. releasing code and data is often one good way to accomplish this, but reproducibility can also be provided via detailed instructions for how to replicate the results, access to a hosted model (e.g., in the case of a large language model), releasing of a model checkpoint, or other means that are appropriate to the research performed.
        \item While NeurIPS does not require releasing code, the conference does require all submissions to provide some reasonable avenue for reproducibility, which may depend on the nature of the contribution. For example
        \begin{enumerate}
            \item If the contribution is primarily a new algorithm, the paper should make it clear how to reproduce that algorithm.
            \item If the contribution is primarily a new model architecture, the paper should describe the architecture clearly and fully.
            \item If the contribution is a new model (e.g., a large language model), then there should either be a way to access this model for reproducing the results or a way to reproduce the model (e.g., with an open-source dataset or instructions for how to construct the dataset).
            \item We recognize that reproducibility may be tricky in some cases, in which case authors are welcome to describe the particular way they provide for reproducibility. In the case of closed-source models, it may be that access to the model is limited in some way (e.g., to registered users), but it should be possible for other researchers to have some path to reproducing or verifying the results.
        \end{enumerate}
    \end{itemize}

\item {\bf Open access to data and code}
    \item[] Question: Does the paper provide open access to the data and code, with sufficient instructions to faithfully reproduce the main experimental results, as described in supplemental material?
    \item[] Answer: \answerYes{} 
    \item[] Justification: Anonymized code to reproduce all experiments is provided in the supplementary material
    \item[] Guidelines:
    \begin{itemize}
        \item The answer \answerNA{} means that paper does not include experiments requiring code.
        \item Please see the NeurIPS code and data submission guidelines (\url{https://neurips.cc/public/guides/CodeSubmissionPolicy}) for more details.
        \item While we encourage the release of code and data, we understand that this might not be possible, so \answerNo{} is an acceptable answer. Papers cannot be rejected simply for not including code, unless this is central to the contribution (e.g., for a new open-source benchmark).
        \item The instructions should contain the exact command and environment needed to run to reproduce the results. See the NeurIPS code and data submission guidelines (\url{https://neurips.cc/public/guides/CodeSubmissionPolicy}) for more details.
        \item The authors should provide instructions on data access and preparation, including how to access the raw data, preprocessed data, intermediate data, and generated data, etc.
        \item The authors should provide scripts to reproduce all experimental results for the new proposed method and baselines. If only a subset of experiments are reproducible, they should state which ones are omitted from the script and why.
        \item At submission time, to preserve anonymity, the authors should release anonymized versions (if applicable).
        \item Providing as much information as possible in supplemental material (appended to the paper) is recommended, but including URLs to data and code is permitted.
    \end{itemize}

\item {\bf Experimental setting/details}
    \item[] Question: Does the paper specify all the training and test details (e.g., data splits, hyperparameters, how they were chosen, type of optimizer) necessary to understand the results?
    \item[] Answer: \answerYes{} 
    \item[] Justification: Section 4.1 describes data splits (val 500 / 5 000 / 25 000), evaluation metrics (ECE, AdaECE, KDE-ECE, NLL), number of seeds (5), and baselines. Optimizer (Adam) and full hyperparameter grid are documented in Section 3.3 and Appendix B (Table 6).
    \item[] Guidelines:
    \begin{itemize}
        \item The answer \answerNA{} means that the paper does not include experiments.
        \item The experimental setting should be presented in the core of the paper to a level of detail that is necessary to appreciate the results and make sense of them.
        \item The full details can be provided either with the code, in appendix, or as supplemental material.
    \end{itemize}

\item {\bf Experiment statistical significance}
    \item[] Question: Does the paper report error bars suitably and correctly defined or other appropriate information about the statistical significance of the experiments?
    \item[] Answer: \answerYes{} 
    \item[] Justification: All main results are reported as means over 5 random seeds. Standard deviations across seeds are provided in Appendix C (Tables 7 and 8) for both CIFAR and ImageNet experiments. Variability captures the random seed used for calibrator fitting.
    \item[] Guidelines:
    \begin{itemize}
        \item The answer \answerNA{} means that the paper does not include experiments.
        \item The authors should answer \answerYes{} if the results are accompanied by error bars, confidence intervals, or statistical significance tests, at least for the experiments that support the main claims of the paper.
        \item The factors of variability that the error bars are capturing should be clearly stated (for example, train/test split, initialization, random drawing of some parameter, or overall run with given experimental conditions).
        \item The method for calculating the error bars should be explained (closed form formula, call to a library function, bootstrap, etc.)
        \item The assumptions made should be given (e.g., Normally distributed errors).
        \item It should be clear whether the error bar is the standard deviation or the standard error of the mean.
        \item It is OK to report 1-sigma error bars, but one should state it. The authors should preferably report a 2-sigma error bar than state that they have a 96\% CI, if the hypothesis of Normality of errors is not verified.
        \item For asymmetric distributions, the authors should be careful not to show in tables or figures symmetric error bars that would yield results that are out of range (e.g., negative error rates).
        \item If error bars are reported in tables or plots, the authors should explain in the text how they were calculated and reference the corresponding figures or tables in the text.
    \end{itemize}

\item {\bf Experiments compute resources}
    \item[] Question: For each experiment, does the paper provide sufficient information on the computer resources (type of compute workers, memory, time of execution) needed to reproduce the experiments?
    \item[] Answer: \answerYes{} 
    \item[] Justification: Wall-clock fitting and inference times are reported in Table 4 on a single CPU. Calibration is lightweight (fitting completes in seconds to tens of seconds), 
    \item[] Guidelines:
    \begin{itemize}
        \item The answer \answerNA{} means that the paper does not include experiments.
        \item The paper should indicate the type of compute workers CPU or GPU, internal cluster, or cloud provider, including relevant memory and storage.
        \item The paper should provide the amount of compute required for each of the individual experimental runs as well as estimate the total compute. 
        \item The paper should disclose whether the full research project required more compute than the experiments reported in the paper (e.g., preliminary or failed experiments that didn't make it into the paper). 
    \end{itemize}
    
\item {\bf Code of ethics}
    \item[] Question: Does the research conducted in the paper conform, in every respect, with the NeurIPS Code of Ethics \url{https://neurips.cc/public/EthicsGuidelines}?
    \item[] Answer: \answerYes{} 
    \item[] Justification: The research uses only public benchmarks (CIFAR-10/100, ImageNet) and does not involve human subjects, sensitive personal data, or deployments in high-risk settings. We are not aware of any deviation from the NeurIPS Code of Ethics.
    \item[] Guidelines:
    \begin{itemize}
        \item The answer \answerNA{} means that the authors have not reviewed the NeurIPS Code of Ethics.
        \item If the authors answer \answerNo, they should explain the special circumstances that require a deviation from the Code of Ethics.
        \item The authors should make sure to preserve anonymity (e.g., if there is a special consideration due to laws or regulations in their jurisdiction).
    \end{itemize}

\item {\bf Broader impacts}
    \item[] Question: Does the paper discuss both potential positive societal impacts and negative societal impacts of the work performed?
    \item[] Answer: \answerYes{} 
    \item[] Justification: Better-calibrated classifiers support more trustworthy decision-making in downstream applications (medical diagnosis, autonomous systems, selective prediction), as motivated in Section 1. Potential negative impact is limited: post-hoc calibration does not change classifier predictions and could give a false sense of reliability if calibration sets are not representative of deployment data; users should validate calibration on representative data before relying on calibrated probabilities.
    \item[] Guidelines:
    \begin{itemize}
        \item The answer \answerNA{} means that there is no societal impact of the work performed.
        \item If the authors answer \answerNA{} or \answerNo, they should explain why their work has no societal impact or why the paper does not address societal impact.
        \item Examples of negative societal impacts include potential malicious or unintended uses (e.g., disinformation, generating fake profiles, surveillance), fairness considerations (e.g., deployment of technologies that could make decisions that unfairly impact specific groups), privacy considerations, and security considerations.
        \item The conference expects that many papers will be foundational research and not tied to particular applications, let alone deployments. However, if there is a direct path to any negative applications, the authors should point it out. For example, it is legitimate to point out that an improvement in the quality of generative models could be used to generate Deepfakes for disinformation. On the other hand, it is not needed to point out that a generic algorithm for optimizing neural networks could enable people to train models that generate Deepfakes faster.
        \item The authors should consider possible harms that could arise when the technology is being used as intended and functioning correctly, harms that could arise when the technology is being used as intended but gives incorrect results, and harms following from (intentional or unintentional) misuse of the technology.
        \item If there are negative societal impacts, the authors could also discuss possible mitigation strategies (e.g., gated release of models, providing defenses in addition to attacks, mechanisms for monitoring misuse, mechanisms to monitor how a system learns from feedback over time, improving the efficiency and accessibility of ML).
    \end{itemize}
    
\item {\bf Safeguards}
    \item[] Question: Does the paper describe safeguards that have been put in place for responsible release of data or models that have a high risk for misuse (e.g., pre-trained language models, image generators, or scraped datasets)?
    \item[] Answer: \answerNA{} 
    \item[] Justification: The paper does not release pretrained generative models, scraped datasets, or other assets with elevated misuse risk. The proposed calibrator is a small MLP that operates on the logits of an already-trained classifier.
    \item[] Guidelines:
    \begin{itemize}
        \item The answer \answerNA{} means that the paper poses no such risks.
        \item Released models that have a high risk for misuse or dual-use should be released with necessary safeguards to allow for controlled use of the model, for example by requiring that users adhere to usage guidelines or restrictions to access the model or implementing safety filters. 
        \item Datasets that have been scraped from the Internet could pose safety risks. The authors should describe how they avoided releasing unsafe images.
        \item We recognize that providing effective safeguards is challenging, and many papers do not require this, but we encourage authors to take this into account and make a best faith effort.
    \end{itemize}

\item {\bf Licenses for existing assets}
    \item[] Question: Are the creators or original owners of assets (e.g., code, data, models), used in the paper, properly credited and are the license and terms of use explicitly mentioned and properly respected?
    \item[] Answer: \answerYes{} 
    \item[] Justification: All datasets (CIFAR-10/100, ImageNet) and architectures (ResNet, VGG, DenseNet, Wide ResNet, ViT) are cited in Section 4.1. Baseline methods (TS, ETS, PTS, UMNN, Spline, Dirichlet, Histogram Binning, Isotonic, Platt, Vector/Matrix scaling) are cited at first use. License terms of these public assets are respected. 
    \item[] Guidelines:
    \begin{itemize}
        \item The answer \answerNA{} means that the paper does not use existing assets.
        \item The authors should cite the original paper that produced the code package or dataset.
        \item The authors should state which version of the asset is used and, if possible, include a URL.
        \item The name of the license (e.g., CC-BY 4.0) should be included for each asset.
        \item For scraped data from a particular source (e.g., website), the copyright and terms of service of that source should be provided.
        \item If assets are released, the license, copyright information, and terms of use in the package should be provided. For popular datasets, \url{paperswithcode.com/datasets} has curated licenses for some datasets. Their licensing guide can help determine the license of a dataset.
        \item For existing datasets that are re-packaged, both the original license and the license of the derived asset (if it has changed) should be provided.
        \item If this information is not available online, the authors are encouraged to reach out to the asset's creators.
    \end{itemize}

\item {\bf New assets}
    \item[] Question: Are new assets introduced in the paper well documented and is the documentation provided alongside the assets?
    \item[] Answer: \answerYes{} 
    \item[] Justification: we will provided a documented code base along with this paper
    \item[] Guidelines:
    \begin{itemize}
        \item The answer \answerNA{} means that the paper does not release new assets.
        \item Researchers should communicate the details of the dataset\slash code\slash model as part of their submissions via structured templates. This includes details about training, license, limitations, etc. 
        \item The paper should discuss whether and how consent was obtained from people whose asset is used.
        \item At submission time, remember to anonymize your assets (if applicable). You can either create an anonymized URL or include an anonymized zip file.
    \end{itemize}

\item {\bf Crowdsourcing and research with human subjects}
    \item[] Question: For crowdsourcing experiments and research with human subjects, does the paper include the full text of instructions given to participants and screenshots, if applicable, as well as details about compensation (if any)? 
    \item[] Answer: \answerNA{} 
    \item[] Justification: The work involves no crowdsourcing or human-subject experiments.
    \item[] Guidelines:
    \begin{itemize}
        \item The answer \answerNA{} means that the paper does not involve crowdsourcing nor research with human subjects.
        \item Including this information in the supplemental material is fine, but if the main contribution of the paper involves human subjects, then as much detail as possible should be included in the main paper. 
        \item According to the NeurIPS Code of Ethics, workers involved in data collection, curation, or other labor should be paid at least the minimum wage in the country of the data collector. 
    \end{itemize}

\item {\bf Institutional review board (IRB) approvals or equivalent for research with human subjects}
    \item[] Question: Does the paper describe potential risks incurred by study participants, whether such risks were disclosed to the subjects, and whether Institutional Review Board (IRB) approvals (or an equivalent approval/review based on the requirements of your country or institution) were obtained?
    \item[] Answer: \answerNA{}{} 
    \item[] Justification: The work involves no human-subject research and therefore does not require IRB review.
    \item[] Guidelines:
    \begin{itemize}
        \item The answer \answerNA{} means that the paper does not involve crowdsourcing nor research with human subjects.
        \item Depending on the country in which research is conducted, IRB approval (or equivalent) may be required for any human subjects research. If you obtained IRB approval, you should clearly state this in the paper. 
        \item We recognize that the procedures for this may vary significantly between institutions and locations, and we expect authors to adhere to the NeurIPS Code of Ethics and the guidelines for their institution. 
        \item For initial submissions, do not include any information that would break anonymity (if applicable), such as the institution conducting the review.
    \end{itemize}

\item {\bf Declaration of LLM usage}
    \item[] Question: Does the paper describe the usage of LLMs if it is an important, original, or non-standard component of the core methods in this research? Note that if the LLM is used only for writing, editing, or formatting purposes and does \emph{not} impact the core methodology, scientific rigor, or originality of the research, declaration is not required.
    \item[] Answer: \answerNA{} 
    \item[] Justification: LLMs were not used as a component of the core methodology. Any use of LLMs was limited to non-substantive editing of writing, which does not require declaration per the NeurIPS policy.
    \item[] Guidelines:
    \begin{itemize}
        \item The answer \answerNA{} means that the core method development in this research does not involve LLMs as any important, original, or non-standard components.
        \item Please refer to our LLM policy in the NeurIPS handbook for what should or should not be described.
    \end{itemize}

\end{enumerate}

%% file: references.bib
@inproceedings{guo2017calibration,
  title={On calibration of modern neural networks},
  author={Guo, Chuan and Pleiss, Geoff and Sun, Yu and Weinberger, Kilian Q},
  booktitle={International conference on machine learning},
  pages={1321--1330},
  year={2017},
  organization={PMLR}
}

@article{platt1999probabilistic,
  title={Probabilistic outputs for support vector machines and comparisons to regularized likelihood methods},
  author={Platt, John and others},
  journal={Advances in large margin classifiers},
  volume={10},
  number={3},
  pages={61--74},
  year={1999},
  publisher={Cambridge, MA}
}

@inproceedings{naeini2015obtaining,
  title={Obtaining well calibrated probabilities using bayesian binning},
  author={Naeini, Mahdi Pakdaman and Cooper, Gregory and Hauskrecht, Milos},
  booktitle={Proceedings of the AAAI conference on artificial intelligence},
  volume={29},
  number={1},
  year={2015}
}

@inproceedings{zhang2020mix,
  title={Mix-n-match: Ensemble and compositional methods for uncertainty calibration in deep learning},
  author={Zhang, Jize and Kailkhura, Bhavya and Han, T Yong-Jin},
  booktitle={International conference on machine learning},
  pages={11117--11128},
  year={2020},
  organization={PMLR}
}

@article{rahimi2020intra,
  title={Intra order-preserving functions for calibration of multi-class neural networks},
  author={Rahimi, Amir and Shaban, Amirreza and Cheng, Ching-An and Hartley, Richard and Boots, Byron},
  journal={Advances in neural information processing systems},
  volume={33},
  pages={13456--13467},
  year={2020}
}

@inproceedings{tomani2022parameterized,
  title={Parameterized temperature scaling for boosting the expressive power in post-hoc uncertainty calibration},
  author={Tomani, Christian and Cremers, Daniel and Buettner, Florian},
  booktitle={European conference on computer vision},
  pages={555--569},
  year={2022},
  organization={Springer}
}

@inproceedings{zadrozny2001obtaining,
  title={Obtaining calibrated probability estimates from decision trees and naive bayesian classifiers},
  author={Zadrozny, Bianca and Elkan, Charles},
  booktitle={Icml},
  volume={1},
  number={05},
  pages={2001},
  year={2001}
}

@inproceedings{zadrozny2002transforming,
  title={Transforming classifier scores into accurate multiclass probability estimates},
  author={Zadrozny, Bianca and Elkan, Charles},
  booktitle={Proceedings of the eighth ACM SIGKDD international conference on Knowledge discovery and data mining},
  pages={694--699},
  year={2002}
}

@article{gupta2020spline,
  title={Calibration of neural networks using splines},
  author={Gupta, Kartik and Rahimi, Amir and Ajanthan, Thalaiyasingam and Mensink, Thomas and Sminchisescu, Cristian and Hartley, Richard},
  journal={arXiv preprint arXiv:2006.12800},
  year={2020}
}

@article{kull2019dirichlet,
  title={Beyond temperature scaling: Obtaining well-calibrated multi-class probabilities with dirichlet calibration},
  author={Kull, Meelis and Perello Nieto, Miquel and K{\"a}ngsepp, Markus and Silva Filho, Telmo and Song, Hao and Flach, Peter},
  journal={Advances in neural information processing systems},
  volume={32},
  year={2019}
}

@article{wehenkel2019umnn,
  title={Unconstrained monotonic neural networks},
  author={Wehenkel, Antoine and Louppe, Gilles},
  journal={Advances in neural information processing systems},
  volume={32},
  year={2019}
}

@inproceedings{nixon2019measuring,
  title={Measuring calibration in deep learning.},
  author={Nixon, Jeremy and Dusenberry, Michael W and Zhang, Linchuan and Jerfel, Ghassen and Tran, Dustin},
  booktitle={CVPR workshops},
  number={7},
  year={2019}
}

@article{minderer2021revisiting,
  title={Revisiting the calibration of modern neural networks},
  author={Minderer, Matthias and Djolonga, Josip and Romijnders, Rob and Hubis, Frances and Zhai, Xiaohua and Houlsby, Neil and Tran, Dustin and Lucic, Mario},
  journal={Advances in neural information processing systems},
  volume={34},
  pages={15682--15694},
  year={2021}
}

@inproceedings{zhu2017unpaired,
  title={Unpaired image-to-image translation using cycle-consistent adversarial networks},
  author={Zhu, Jun-Yan and Park, Taesung and Isola, Phillip and Efros, Alexei A},
  booktitle={Proceedings of the IEEE international conference on computer vision},
  pages={2223--2232},
  year={2017}
}

@article{kingma2014adam,
  title={Adam: A method for stochastic optimization},
  author={Kingma, Diederik P and Ba, Jimmy},
  journal={arXiv preprint arXiv:1412.6980},
  year={2014}
}

@article{krizhevsky2009learning,
  title={Learning multiple layers of features from tiny images},
  author={Krizhevsky, Alex and Hinton, Geoffrey and others},
  year={2009},
  publisher={Toronto, ON, Canada}
}

@inproceedings{deng2009imagenet,
  title={Imagenet: A large-scale hierarchical image database},
  author={Deng, Jia and Dong, Wei and Socher, Richard and Li, Li-Jia and Li, Kai and Fei-Fei, Li},
  booktitle={2009 IEEE conference on computer vision and pattern recognition},
  pages={248--255},
  year={2009},
  organization={Ieee}
}

@article{simonyan2014vgg,
  title={Very deep convolutional networks for large-scale image recognition},
  author={Simonyan, Karen and Zisserman, Andrew},
  journal={arXiv preprint arXiv:1409.1556},
  year={2014}
}

@inproceedings{huang2017densely,
  title={Densely connected convolutional networks},
  author={Huang, Gao and Liu, Zhuang and Van Der Maaten, Laurens and Weinberger, Kilian Q},
  booktitle={Proceedings of the IEEE conference on computer vision and pattern recognition},
  pages={4700--4708},
  year={2017}
}

@article{zagoruyko2016wide,
  title={Wide residual networks},
  author={Zagoruyko, Sergey and Komodakis, Nikos},
  journal={arXiv preprint arXiv:1605.07146},
  year={2016}
}

@inproceedings{dosovitskiy2021vit,
  title={An Image is Worth 16x16 Words: Transformers for Image Recognition at Scale},
  author={Dosovitskiy, Alexey and Beyer, Lucas and Kolesnikov, Alexander and Weissenborn, Dirk and Zhai, Xiaohua and Unterthiner, Thomas and Dehghani, Mostafa and Minderer, Matthias and Heigold, Georg and Gelly, Sylvain and Uszkoreit, Jakob and Houlsby, Neil},
  booktitle={Proceedings of the 9th International Conference on Learning Representations (ICLR)},
  year={2021}
}

@inproceedings{he2016resnet,
  title={Deep residual learning for image recognition},
  author={He, Kaiming and Zhang, Xiangyu and Ren, Shaoqing and Sun, Jian},
  booktitle={Proceedings of the IEEE conference on computer vision and pattern recognition},
  pages={770--778},
  year={2016}
}
